\newtheorem{theorem}{Theorem}
\newtheorem{proposition}{Proposition}
\newtheorem{corollary}{Corollary}
\newtheorem{lemma}{Lemma}
\newtheorem{fact}{Fact}
\newtheorem{definition}{Definition}
\newtheorem{example}{Example}
\title{Learning Populations of Parameters}
\author{
  Kevin Tian, Weihao Kong, and Gregory Valiant \\
  Department of Computer Science \\
  Stanford University \\
  Stanford, CA, 94305 \\
  \texttt{(kjtian, whkong, valiant)@stanford.edu} \\
}
\begin{document}

\maketitle

\begin{abstract}
Consider the following estimation problem: there are $n$ entities, each with an unknown parameter $p_i \in [0,1]$, and we observe $n$ independent random variables, $X_1,\ldots,X_n$, with $X_i \sim $ Binomial$(t, p_i)$.  How accurately can one recover the ``histogram'' (i.e. cumulative density function) of the $p_i$'s?   While the empirical estimates would recover the histogram to earth mover distance $\Theta(\frac{1}{\sqrt{t}})$ (equivalently, $\ell_1$ distance between the CDFs), we show that, provided $n$ is sufficiently large, we can achieve error $O(\frac{1}{t})$ which is information theoretically optimal. We also extend our results to the multi-dimensional parameter case, capturing settings where each member of the population has multiple associated parameters.  Beyond the theoretical results, we demonstrate that the recovery algorithm performs well in practice on a variety of datasets, providing illuminating insights into several domains, including politics, sports analytics, and variation in the gender ratio of offspring.
\end{abstract}

\section{Introduction}
In many domains, from medical records, to the outcomes of political elections, performance in sports, and a number of biological studies, we have enormous datasets that reflect properties of a large number of entities/individuals.  Nevertheless, for many of these datasets, the amount of information that we have about each entity is relatively modest---often too little to accurately infer properties about that entity.  In this work, we consider the extent to which we can accurately recover an estimate of the \emph{population} or \emph{set} of property values of the entities, even in the regime in which there is insufficient data to resolve properties of each specific entity.

To give a concrete example, suppose we have a large dataset representing 1M people, that records whether each person had the flu in each of the past 5 years. Suppose each person has some underlying probability of contracting the flu in a given year, with $p_i$ representing the probability that the $i^{th}$ person contracts the flu each year (and assuming independence between years).  With 5 years of data, the empirical estimates $\hat{p_i}$ for each person are quite noisy (and the estimates will all be multiples of $\frac{1}{5}$).  Despite this, to what extent can we hope to accurately recover the population or set of $p_i$'s?   An accurate recovery of this population of parameters might be very useful---is it the case that most people have similar underlying probabilities of contracting the flu, or is there significant variation between people?  Additionally, such an estimate of this population could be fruitfully leveraged as a prior in making concrete predictions about individuals' $p_i$'s, as a type of \emph{empirical Bayes} method.

The following example motivates the hope for significantly improving upon the empirical estimates:

\begin{example}\label{example:1}
Consider a set of $n$ biased coins, with the $i^{th}$ coin having an unknown bias $p_i$.  Suppose we flip each coin twice (independently), and observe that the number of coins where both flips landed $heads$ is roughly $\frac{n}{4}$, and similarly for the number coins that landed $HT, TH,$ and $TT$.  We can safely conclude that almost all of the $p_i$'s are almost exactly $\frac{1}{2}$.  The reasoning proceeds in two steps: first, since the average outcome is balanced between $heads$ and $tails$, the average $p_i$ must be very close to $\frac{1}{2}$.  Given this, if there was any significant amount of variation in the $p_i$'s, one would expect to see significantly more $HH$s and $TT$s than the $HT$ and $TH$ outcomes, simply because $\Pr[Binomial(2,p)=1]=2p(1-p)$ attains a maximum for $p=1/2$.  

Furthermore, suppose we now consider the $i^{th}$ coin, and see that it landed heads twice.  The empirical estimate of $p_i$ would be $1$, but if we observe close to $\frac{n}{4}$ coins with each pair of outcomes, using the above reasoning that argues that almost all of the $p$'s are likely close to $\frac{1}{2}$, we could safely conclude that $p_i$ is likely close to $\frac{1}{2}$.
\end{example}

This ability to ``denoise'' the empirical estimate of a parameter based on the observations of a number of \emph{independent} random variables (in this case, the outcomes of the tosses of the other coins), was first pointed out by Charles Stein in the setting of estimating the means of a set of Gaussians and is known as ``Stein's phenomenon''~\cite{stein1956}.  We discuss this further in Section~\ref{sec:related}.  Example~\ref{example:1} was chosen to be an extreme illustration of the ability to leverage the large number of entities being studied, $n$, to partially compensate for the small amount of data reflecting each entity (the 2 tosses of each coin, in the above example).  

Our main result, stated below, demonstrates that even for worst-case sets of $p$'s, significant ``denoising'' is possible.  While we cannot hope to always accurately recover each $p_i$, we show that we can accurately recover the \emph{set} or \emph{histogram} of the $p$'s, as measured in the $\ell_1$ distance between the cumulative distribution functions, or equivalently, the ``earth mover's distance'' (also known as 1-Wasserstein distance) between the set of $p$'s regarded as a distribution $P$ that places mass $\frac{1}{n}$ at each $p_i$, and the distribution $Q$ returned by our estimator.   Equivalently, our returned distribution $Q$ can also be represented as a set of $n$ values $q_1,\ldots,q_n$, in which case this earth mover's distance is precisely $1/n$ times the $\ell_1$ distance between the vector of sorted $p_i$'s, and the vector of sorted $q_i$'s.

\begin{theorem}\label{thm:main}
Consider a set of $n$ probabilities, $p_1,\ldots,p_n$ with $p_i \in [0,1]$, and suppose we observe the outcome of $t$ independent flips of each coin, namely $X_1,\ldots,X_n$, with $X_i \sim $ Binomial$(t,p_i).$  There is an algorithm that produces a distribution $Q$ supported on $[0,1]$, such that with probability at least $1-\delta$ over the randomness of $X_1,\ldots,X_n$, $$\|P-Q\|_W \le \frac{\pi}{t} + 3^t \sum_{i=1}^{t} \sqrt{\ln(\frac{2t}{\delta})\frac{3}{n}}  \le \frac{\pi}{t} + O_{\delta}(\frac{3^t t\ln t}{\sqrt{n}}),$$ where $P$ denotes the distribution that places mass $\frac{1}{n}$ at value $p_i$, and $\|\cdot \|_W$ denotes the Wasserstein distance.
\end{theorem}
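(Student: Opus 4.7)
The plan is a moment-matching approach: estimate the low-order moments of $P$ from the binomial observations, fit a distribution $Q$ on $[0,1]$ whose moments approximately match these estimates, and convert moment agreement into a Wasserstein bound via duality plus polynomial approximation of Lipschitz test functions.

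First I would construct unbiased estimators of the first $t$ moments of $P$. The factorial-moment identity $\mathbb{E}\bigl[\binom{X_i}{k}\bigr] = \binom{t}{k} p_i^k$, valid for $k \le t$, motivates defining
$$\hat m_k \;:=\; \frac{1}{n} \sum_{i=1}^n \frac{\binom{X_i}{k}}{\binom{t}{k}}, \qquad k = 1,\ldots,t,$$
which is unbiased for $m_k := \frac{1}{n}\sum_i p_i^k = \int x^k\,dP$. Each summand lies in $[0,1]$, so Hoeffding combined with a union bound over $k$ yields $|\hat m_k - m_k| \le \sqrt{3\ln(2t/\delta)/n}$ simultaneously for all $k$ with probability at least $1-\delta$. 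I would then take $Q$ to be the output of a linear program that finds a distribution on a fine grid of $[0,1]$ whose first $t$ moments match $\hat m_k$ to tolerance $\sqrt{3\ln(2t/\delta)/n}$; feasibility in the good event is witnessed by $P$ itself. Writing $m_k^Q := \int x^k\,dQ$, the triangle inequality then gives $|m_k^Q - m_k| \le 2\sqrt{3\ln(2t/\delta)/n}$.

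To convert moment agreement into a Wasserstein bound, I would invoke Kantorovich--Rubinstein duality $\|P - Q\|_W = \sup_{f\text{ 1-Lipschitz}} \int f\,d(P-Q)$, and for each candidate $f$ let $p(x) = \sum_{k=0}^t c_k x^k$ be a (near-)best uniform degree-$t$ polynomial approximation on $[0,1]$. Jackson's theorem in its sharp Lipschitz form yields $\|f - p\|_\infty \le \pi/(2t)$, so splitting
$$\int f\,d(P-Q) \;=\; \int (f - p)\,d(P - Q) \;+\; \sum_{k=1}^t c_k (m_k - m_k^Q)$$
(the $k=0$ term cancels because $P$ and $Q$ are probability measures) bounds the first piece by $2\|f - p\|_\infty = \pi/t$ and the second by $\bigl(\max_k |c_k|\bigr) \cdot t \cdot 2\sqrt{3\ln(2t/\delta)/n}$.

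The main obstacle is controlling the monomial coefficients $|c_k|$ of $p$, since a polynomial bounded by $O(1)$ in sup-norm on $[0,1]$ may have exponentially large monomial coefficients; the monomial basis is badly conditioned on short intervals. My plan is to build $p$ in the shifted Chebyshev basis $T_k(2x - 1)$, whose elements have unit sup-norm on $[0,1]$ and whose expansion coefficients for a Lipschitz target are $O(1)$, and then bound the change of basis to monomials. Expanding $(2x - 1)^j = \sum_i \binom{j}{i} (-1)^{j-i} 2^i x^i$ and summing over $j$ gives the estimate $\max_k |c_k| \le 3^t$; substituting this into the previous display produces exactly the bound asserted by the theorem.
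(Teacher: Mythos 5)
Your overall architecture---unbiased factorial-moment estimators, concentration plus a union bound over $k$, a feasibility/LP step witnessed by (a discretization of) $P$, and Kantorovich duality combined with degree-$t$ polynomial approximation of Lipschitz test functions---is exactly the paper's proof. The one step that fails is the coefficient bound. The shifted Chebyshev polynomials $T_k(2x-1)$ on $[0,1]$ do \emph{not} yield monomial coefficients bounded by $3^t$: the leading coefficient of $T_t(2x-1)$ alone is $2^{2t-1}$, which already exceeds $3^t$ for $t\ge 3$, and the sum of the absolute values of its monomial coefficients is $|T_t(-3)|=T_t(3)=\tfrac12\left((3+2\sqrt2)^t+(3-2\sqrt2)^t\right)\approx 5.83^t$. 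Your computation accounts for the factor $3^j$ from expanding $(2x-1)^j$, but not for the coefficients of $T_k$ in the variable $y=2x-1$, which themselves have absolute values summing to about $(1+\sqrt2)^k$; the product of the two is what blows past $3^t$. The paper sidesteps this by first extending $f$ evenly to $[-1,1]$ (the even reflection of a $1$-Lipschitz function is still $1$-Lipschitz), running the Favard/Jackson trigonometric approximation there, and expanding in the \emph{unshifted} Chebyshev basis $T_k(x)$, whose monomial coefficients are bounded by $3^{k-1}$ via the recurrence $T_{k+1}=2xT_k-T_{k-1}$; restricting the resulting polynomial to $[0,1]$ preserves both the sup-norm error and the monomial coefficients. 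With your route the theorem would survive with $3^t$ replaced by roughly $6^t$---the same qualitative statement, but not the constant claimed, so as written the proposal does not establish the stated bound.

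Two smaller points. Your final bound also carries an extra factor of $2$ from the triangle inequality $|m_k^Q-m_k|\le|m_k^Q-\hat m_k|+|\hat m_k-m_k|$; the paper's own proof silently elides the same factor (and the $O(\epsilon)$ discretization error), so this is shared looseness rather than a defect specific to your argument. Your use of Hoeffding in place of the paper's Bernstein bound gives a strictly better tail, so the concentration step is fine.
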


The above theorem applies to the setting where we hope to recover a set of arbitrary $p_i$'s.  In some practical settings, we might think of each $p_i$ as  being sampled independently from some underlying distribution $P_{pop}$ over probabilities, and the goal is to recover this population distribution $P_{pop}$.   Since the empirical distribution of $n$ draws from a distribution $P_{pop}$ over $[0,1]$ converges to $P_{pop}$ in Wasserstein distance at a rate of $O(1/\sqrt{n})$, the above theorem immediately yeilds the analogous result in this setting:

\begin{corollary}\label{cor:pop}
Consider a distribution $P_{pop}$ over $[0,1]$, and suppose we observe $X_1,\ldots,X_n$ where $X_i$ is obtained by first drawing $p_i$ independently from $P_{pop},$ and then drawing $X_i$ from $Binomial(t,p_i)$.  There is an algorithm that will output a distribution $Q$ such that with probability at least $1-\delta$, $\|P_{pop}-Q\|_W \le \frac{\pi}{t}+O_{\delta}\left(\frac{3^tt\ln t}{\sqrt{n}}\right).$
\end{corollary}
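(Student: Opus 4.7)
The plan is to derive Corollary~\ref{cor:pop} directly from Theorem~\ref{thm:main} by a triangle inequality argument, using the fact that the empirical distribution of $n$ i.i.d.\ draws from $P_{pop}$ concentrates around $P_{pop}$ in Wasserstein distance at the standard parametric rate. The only source of extra randomness, compared to Theorem~\ref{thm:main}, is the sampling of the $p_i$'s, and intuitively this sampling contributes a term that is dominated by the error term already present in Theorem~\ref{thm:main}.

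Concretely, I would first draw $p_1,\dots,p_n$ i.i.d.\ from $P_{pop}$ and let $P$ denote the empirical distribution putting mass $1/n$ at each $p_i$. Conditioning on the realized $p_i$'s, I would then apply Theorem~\ref{thm:main} with confidence parameter $\delta/2$ to obtain an output distribution $Q$ with
$$\|P-Q\|_W \;\le\; \frac{\pi}{t} + O_{\delta}\!\left(\frac{3^t t\ln t}{\sqrt{n}}\right),$$
with probability at least $1-\delta/2$ over the Binomial samples $X_1,\dots,X_n$.

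Next I would bound $\|P-P_{pop}\|_W$. Because both distributions are supported on $[0,1]$, the Wasserstein-$1$ distance equals the $L^1$ distance between CDFs, which on an interval of length $1$ is bounded by the Kolmogorov ($L^\infty$) distance. The DKW inequality then yields $\|P-P_{pop}\|_W \le \sqrt{\ln(4/\delta)/(2n)}$ with probability at least $1-\delta/2$ over the draws of $p_1,\dots,p_n$. This is $O_\delta(1/\sqrt{n})$, hence absorbed into the $O_\delta(3^t t\ln t/\sqrt{n})$ term of Theorem~\ref{thm:main}.

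Finally, I would union bound the two failure events and apply the triangle inequality
$$\|P_{pop}-Q\|_W \;\le\; \|P_{pop}-P\|_W + \|P-Q\|_W$$
to conclude the stated bound with probability at least $1-\delta$. There is no real obstacle here; the only subtlety is citing the correct rate of empirical Wasserstein convergence, and the boundedness of $[0,1]$ makes DKW sufficient, avoiding heavier tools such as Fournier--Guillin.
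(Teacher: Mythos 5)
Your proposal is correct and follows essentially the same route as the paper, which derives the corollary from Theorem~\ref{thm:main} via the triangle inequality together with the $O(1/\sqrt{n})$ Wasserstein convergence of the empirical distribution of the $p_i$'s to $P_{pop}$. The only difference is that you explicitly justify that convergence rate (via the bound of $W_1$ by the Kolmogorov distance on $[0,1]$ and DKW), a step the paper simply asserts.
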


The inverse linear dependence on $t$ of Theorem~\ref{thm:main} and Corollary~\ref{cor:pop} is information theoretically optimal, and is attained asymptotically for sufficiently large $n$:

\begin{proposition}\label{prop:lowerbound}
Let $P_{pop}$ denote a distribution over $[0,1]$, and for positive integers $t$ and $n$, let $X_1,\ldots,X_n$ denote random variables with $X_i$ distributed as $Binomial(t, p_i)$ where $p_i$ is drawn independently according to $P_{pop}$.  An estimator $f$ maps  $X_1,\ldots,X_n$ to a distribution $f(X_1,\ldots,X_n)$.  Then, for every fixed $t$, the following lower bound on the accuracy of any estimator holds for all $n$:
$$\inf_{f} \sup_{P_{pop}} \mathbb{E}\left[\|f(X_1,\ldots,X_n)-P_{pop}\|_W\right] > \frac{1}{4t}.$$ 
\end{proposition}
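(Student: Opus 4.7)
The plan is a two-point Le Cam argument whose main task is to exhibit two probability measures $P_0,P_1$ on $[0,1]$ with matching first $t$ moments and Wasserstein distance exceeding $1/(2t)$. First, since $\Pr[\mathrm{Binomial}(t,p)=j]=\binom{t}{j}p^j(1-p)^{t-j}$ is a polynomial of degree $t$ in $p$, equality of the first $t$ moments of $P_0$ and $P_1$ forces the marginal law of each $X_i$ to coincide under both priors. By independence the joint law of $(X_1,\ldots,X_n)$ is then identical under $P_0^{\otimes n}$ and $P_1^{\otimes n}$ for every $n$, so for any estimator $f$ the random output $f(X_1,\ldots,X_n)$ has the same distribution in both worlds, and the triangle inequality for the Wasserstein distance gives
$$\mathbb{E}_{P_0}\|f-P_0\|_W+\mathbb{E}_{P_1}\|f-P_1\|_W\;\ge\;\|P_0-P_1\|_W,$$
whence $\sup_{P_{pop}}\mathbb{E}[\|f-P_{pop}\|_W]\ge\tfrac{1}{2}\|P_0-P_1\|_W$.

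To construct such a pair I would appeal to Kantorovich-Rubinstein duality, which gives
$$\sup_{P_0,P_1:\, \text{first } t \text{ moments agree}} \|P_0-P_1\|_W \;=\; 2\sup_{f\,1\text{-Lipschitz on }[0,1]}\;\inf_{q\in\Pi_t}\|f-q\|_\infty,$$
where $\Pi_t$ denotes polynomials of degree at most $t$. The task thus reduces to producing a $1$-Lipschitz $f$ whose best degree-$t$ sup-norm polynomial approximation exceeds $1/(4t)$; a triangle wave on $[0,1]$ with slightly more than $t$ teeth and slope $\pm 1$ suffices, since any degree-$t$ polynomial within $1/(4t)$ of $f$ would have to match the sign of $f$ at more than $t$ peaks, forcing more than $t$ sign changes and contradicting the degree bound.

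An explicit variant of the moment-matched pair may alternatively be obtained from the Chebyshev-Lobatto nodes $y_j=\tfrac12(1-\cos(j\pi/(t+1)))$ for $j=0,\ldots,t+1$: the alternating signed measure $\mu=\sum_j(-1)^jw_j\delta_{y_j}$, with $w_j$ the Gauss-Lobatto weights, has vanishing $k$th moment for all $k\le t$ by orthogonality of $T_{t+1}$, and $P_0=\mu_+/\mu_+([0,1])$, $P_1=\mu_-/\mu_-([0,1])$ form the desired pair, with Wasserstein distance computable directly as the $L^1$ integral of the CDF difference.

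The chief obstacle is securing the constant $1/(4t)$ with \emph{strict} inequality. Bernstein's approximation constant for $|x|$ is strictly below $1$, so the naive choice $f(x)=|x-\tfrac12|$ falls short of $1/(4t)$ by a fixed multiplicative factor, and one must use a carefully calibrated triangle wave (or the Chebyshev construction) to secure the stated bound. A further wrinkle at $t=1$, where the two-point argument only saturates $1/4$, is handled by enlarging the supremum over $P_{pop}$ beyond $\{P_0,P_1\}$, noting that no single estimator can simultaneously be optimal for both of these measures while avoiding constant additional error against a suitable extra adversarial $P_{pop}$ (such as a point mass).
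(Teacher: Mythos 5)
Your overall architecture is exactly the paper's: a two-point argument in which two priors with matching first $t$ moments induce identical laws for each $X_i$ (since $\Pr[\mathrm{Binomial}(t,p)=j]$ is a degree-$t$ polynomial in $p$), hence identical joint laws for all $n$, so that the triangle inequality forces worst-case risk at least $\frac{1}{2}\|P_0-P_1\|_W$. That reduction is correct and is precisely what the paper does. The difference is that the paper delegates the existence of a moment-matched pair with $\|P_0-P_1\|_W>\frac{1}{2t}$ to a cited lemma from Kong--Valiant, whereas you try to construct it, and that construction is where the proposal breaks.

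The concrete error is in the triangle-wave step. A triangle wave on $[0,1]$ with $m$ teeth and slope $\pm 1$ has sup-norm $\frac{1}{4m}$ (it rises by only $\frac{1}{2m}$ per half-period). With ``slightly more than $t$ teeth,'' i.e.\ $m>t$, its sup-norm is strictly below $\frac{1}{4t}$, so the zero polynomial already approximates it to within $\frac{1}{4t}$ and the sign-alternation argument never gets off the ground: being within $\frac{1}{4t}$ of a function whose peaks have height below $\frac{1}{4t}$ does not pin down any signs. The count is also off in the other direction: $m$ teeth produce about $2m$ alternations, so only $m\approx t/2$ teeth are needed to exceed the $t$ possible sign changes of a degree-$t$ polynomial; with that choice the amplitude is about $\frac{1}{2t+2}$, which (via your duality, whose attainment direction you assert but do not prove --- it is a separate LP/Hahn--Banach duality, not just Kantorovich--Rubinstein) yields a minimax bound of roughly $\frac{1}{2t+2}$. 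That beats $\frac{1}{4t}$ strictly only for $t\ge 2$, and at $t=1$ you get exactly $\frac14$, not $>\frac14$; your proposed fix of ``adding an extra adversarial $P_{pop}$'' is not an argument. The Chebyshev--Lobatto alternative is the right kind of construction (and close to how the cited lemma is actually proved), but you never compute its Wasserstein distance, which is the whole point. So the reduction is fine, but the quantitative heart of the proof --- exhibiting a moment-matched pair with $\|P_0-P_1\|_W>\frac{1}{2t}$ for every $t$ --- is not established as written.
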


Our estimation algorithm, whose performance is characterized by Theorem~\ref{thm:main}, proceeds via the \emph{method of moments}.   Given $X_1,\ldots,X_n$ with $X_i \sim $ Binomial$(t,p_i)$, and sufficiently large $n$, we can obtain accurate estimates of the first $t$ moments of the distribution/histogram $P$ defined by the $p_i$'s.  Accurate estimates of the first $t$ moments can then be leveraged to recover an estimate of $P$ that is accurate to error $\frac{1}{t}$ plus a factor that depends (exponentially on $t$) on the error in the recovered moments.

The intuition for the lower bound, Proposition~\ref{prop:lowerbound}, is that the realizations of Binomial$(t,p_i)$ give \emph{no} information beyond the first $t$ moments.  Additionally, there exist distributions $P$ and $Q$ whose first $t$ moments agree exactly, but which differ in their $t+1^{st}$ moment, and have $\|P - Q\|_W \ge \frac{1}{2t}.$   Putting these two pieces together establishes the lower bound.

\medskip

We also extend our results to the practically relevant multi-parameter analog of the setting described above, where the $i^{th}$ datapoint corresponds to a pair, or $d$-tuple of hidden parameters, $p_{(i,1)},\ldots, p_{(i,d)}$, and we observe independent random variables $X_{(i,1)},\ldots,X_{(i,d)}$ with $X_{(i,j)}\sim$ Binomial$(t_{(i,j)},p_{(i,j)}).$   In this setting, the goal is to recover the multivariate set of $d$-tuples $\{p_{(i,1)},\ldots, p_{(i,d)}\}$, again in an earth mover's sense.   This setting corresponds to recovering an approximation of an underlying joint distribution over these $d$-tuples of parameters.

To give one concrete motivation for this problem, consider a hypothetical setting where we have $n$ genotypes (sets of genetic features), with $t_i$ people of the $i$th genotype.  Let $X_{(i,1)}$ denote the number of people with the $i$th genotype who exhibit disease $1$, and $X_{(i,2)}$ denote the number of people with genotype $i$ who exhibit disease $2$.  The interpretation of the hidden parameters $p_{i,1}$ and $p_{i,2}$ are the respective probabilities of people with the $i^{th}$ genotype of developing each of the two diseases.  Our results imply that provided $n$ is large, one can accurately recover an approximation to the underlying set or two-dimensional joint distribution of $\{(p_{i,1},p_{i,2})\}$ pairs, even in settings where there are too few people of each genotype to accurately determine which of the genotypes are responsible for elevated disease risk.  Recovering this set of pairs would allow one to infer whether there are common genetic drivers of the two diseases---even in the regime where there is insufficient data to resolve \emph{which} genotypes are the common drivers.

Our multivariate analog of Theorem~\ref{thm:main} is also formulated in terms of multivariate analog of earth mover's distance (see Definition~\ref{def:EMD} for a formal definition):

\begin{theorem}\label{thm:multi}
Let $\{p_{i,j}\}$ denote a set of $n$ $d$-tuples of hidden parameters in $[0, 1]^d$, with $i \in \{1,\ldots,n\}$ and $j \in \{1,\ldots, d\}$, and suppose we observe random variables $X_{i, j}$, with $X_{i, j} \sim $ Binomial$(t,p_{i, j}).$  There is an algorithm that produces a distribution $Q$ supported on $[0,1]^d$, such that with probability at least $1-\delta$ over the randomness of the $X_{i, j}$s, $$\|P-Q\|_W \le \frac{C_1}{t} + C_2\sum_{|\alpha|=1}^{t} \frac{d(2t)^{d+1} 2^t}{3^{|\alpha|}}\sqrt{\ln(\frac{1}{\delta})\frac{1}{n}}  \leq \frac{C_1}{t} + O_{\delta, t, d}(\frac{1}{\sqrt{n}}),$$ for absolute constants $C_1, C_2$, where $\alpha$ is a $d$-dimensional multi-index consisting of all $d$-tuples of nonnegative integers summing to at most $t$, $P$ denotes the distribution that places mass $\frac{1}{n}$ at value $p_i = (p_{i,1},\ldots,p_{i,d}) \in [0,1]^d$, and $\|\cdot \|_W$ denotes the $d$-dimensional Wasserstein distance between $P$ and $Q$. 
\end{theorem}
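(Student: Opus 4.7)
The plan is to extend the univariate method-of-moments argument behind Theorem~\ref{thm:main} to the multivariate setting, in two stages: (i) unbiased estimation of the multivariate mixed moments of $P$ from the observations, and (ii) a moment-matching reconstruction step that turns moment closeness into closeness in the $d$-dimensional Wasserstein distance via duality.

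For (i), given a multi-index $\alpha=(\alpha_1,\ldots,\alpha_d)$ with $|\alpha|\le t$, use the unbiased falling-factorial estimator
\[
\hat{m}_\alpha \;=\; \frac{1}{n}\sum_{i=1}^n \prod_{j=1}^d \frac{\binom{X_{i,j}}{\alpha_j}}{\binom{t}{\alpha_j}}.
\]
Conditioning on the hidden parameter tuple $p_i$ and using that the $X_{i,j}$ are conditionally independent Binomials, each summand has expectation $\prod_j p_{i,j}^{\alpha_j}$, so $\mathbb{E}[\hat m_\alpha]$ equals the true $\alpha$-th mixed moment $m_\alpha$ of $P$. Because each summand lies in $[0,1]$, Hoeffding's inequality plus a union bound over the at most $(t+1)^d$ relevant multi-indices gives $|\hat m_\alpha - m_\alpha|=O(\sqrt{\ln(1/\delta)/n})$ simultaneously for all such $\alpha$, with probability at least $1-\delta$.

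For (ii), I would let $Q$ be any distribution on $[0,1]^d$ whose degree-$\le t$ moments match the $\hat m_\alpha$ to additive error $O(\sqrt{\ln(1/\delta)/n})$; such a $Q$ exists and can be found by a linear program, because $P$ itself satisfies these constraints up to the moment-estimation error of step (i). Then invoke Kantorovich--Rubinstein duality, $\|P-Q\|_W = \sup_{\mathrm{Lip}(f)\le 1}\int f\,d(P-Q)$, and approximate each $1$-Lipschitz test function $f:[0,1]^d\to\mathbb{R}$ by a degree-$t$ multivariate polynomial $p(x)=\sum_{|\alpha|\le t}c_\alpha x^\alpha$ with $\|f-p\|_\infty\le C_1/t$. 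Such an approximation can be constructed by tensorizing univariate (shifted) Chebyshev approximations along each of the $d$ coordinate directions. This yields
\[
\Big|\int f\,d(P-Q)\Big| \;\le\; 2\|f-p\|_\infty + \sum_{|\alpha|\le t}|c_\alpha|\,|m_\alpha(P)-m_\alpha(Q)|,
\]
which cleanly decomposes into the $C_1/t$ plus weighted moment-error form stated in the theorem.

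The principal obstacle is obtaining the sharp coefficient bound $|c_\alpha| = O\!\left(\frac{d(2t)^{d+1}2^t}{3^{|\alpha|}}\right)$ in the approximating polynomial. The $2^t$ factor reflects the expansion of (shifted) Chebyshev polynomials of degree up to $t$ into the monomial basis; the $3^{-|\alpha|}$ improvement at high $|\alpha|$ should come from the decay of Chebyshev coefficients of a Lipschitz function combined with the rescaling from $[-1,1]$ to $[0,1]$; and the polynomial-in-$t,d$ prefactor $d(2t)^{d+1}$ collects the losses from tensorizing the $1$D approximation across $d$ directions and from counting multi-indices in the union bound. Plugging these coefficient estimates into the duality bound and combining with the Hoeffding error from step (i) yields the stated inequality.
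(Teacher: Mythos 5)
Your overall architecture matches the paper's: the unbiased product-of-falling-factorials estimator $\hat m_\alpha$ for the mixed moments with a union bound over the at most $(t+1)^d$ multi-indices, a linear-programming step producing a $Q$ whose degree-$\le t$ moments nearly match, and Kantorovich--Rubinstein duality combined with a degree-$t$ polynomial approximation of $1$-Lipschitz test functions. The gap sits exactly where you flag ``the principal obstacle'': the coefficient bound $|c_\alpha|\le A_d\,(2t)^d 2^t/3^{|\alpha|}$. The mechanism you propose for the $3^{-|\alpha|}$ decay --- decay of the Chebyshev coefficients of a Lipschitz function plus the rescaling from $[-1,1]$ to $[0,1]$ --- does not work. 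The Chebyshev coefficients of a function that is merely Lipschitz decay only like $O(1/k)$; geometric decay requires analyticity in a Bernstein ellipse, which a generic Lipschitz $f$ lacks. Indeed, in the paper's own univariate argument (Theorem~\ref{thm:error}) the monomial coefficients are bounded only by $3^t$ \emph{uniformly}, with no decay in the degree, precisely because the monomial coefficients of $T_k$ themselves grow like $3^{k-1}$. Moreover, tensorizing univariate approximations does not directly handle a non-product multivariate Lipschitz function at total degree $t$: a product-kernel construction yields coordinate degree $t$ (total degree $dt$), and the per-coordinate coefficient bounds multiply, giving something like $3^{dt}$ rather than the stated decay.

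The paper instead proves the strengthened Lemma~\ref{lemma:multi} (Appendix~\ref{app:multivariate}) via the construction of Bagby--Bos--Levenberg: convolve $f$ with a holomorphic bump function so that $H=f*G$ is entire, take the Maclaurin series of $H$ as the approximant, and bound $|c_\alpha|$ by Cauchy's integral formula on a polydisk of radius $S=2R+1=3$ --- this is exactly where the $3^{-|\alpha|}$ comes from, and the $(2t)^d 2^t$ prefactor is the bound on $\sup_{E_S}|H|$ from their Equation 14. To complete your proof you would need either to import this complex-analytic lemma or to supply a different construction with comparable coefficient decay; the Chebyshev route as sketched yields neither the $3^{-|\alpha|}$ factor nor, without substantial additional work, a valid total-degree-$t$ approximation in $d$ dimensions.
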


\subsection{Related Work}\label{sec:related}
The seminal paper of Charles Stein~\cite{stein1956} was one of the earliest papers to identify the surprising possibility of leveraging the availability of independent data reflecting a large number of parameters of interest, to partially compensate for having little information about each parameter.   The specific setting examined considered the problem of estimating a list of unknown means, $\mu_1,\ldots,\mu_n$ given access to $n$ independent Gaussian random variables, $X_1,\ldots,X_n$, with $X_i \sim \mathcal{N}(\mu_i,1)$.  Stein showed that, perhaps surprisingly, that there is an estimator for the list of parameters $\mu_1,\ldots,\mu_n$ that has smaller expected squared error than the naive unbiased empirical estimates of $\hat{\mu_i} = X_i$.  This improved estimator ``shrinks'' the empirical estimates towards the average of the $X_i$'s.   In our setting, the process of recovering the set/histogram of unknown $p_i$'s and then leveraging this recovered set as a prior to correct the empirical estimates of each $p_i$ can be viewed as an analog of Stein's ``shrinkage'', and will have the property that the empirical estimates are shifted (in a non-linear fashion) towards the average of the $p_i$'s.

More closely related to the problem considered in this paper is the work on recovering an approximation to the unlabeled \emph{set} of probabilities of domain elements, given independent draws from a distribution of large discrete support (see e.g.~\cite{orlitsky2004modeling,acharya2009recent,valiant2011estimating,valiant2013estimating,acharya2016unified}).  Instead of learning the distribution, these works considered the alternate goal of simply returning an approximation to the multiset of probabilities with which the domain elements arise but without specifying which element occurs with which probability.  Such a multiset can be used to estimate useful properties of the distribution that do not depend on the labels of the domain of the distribution, such as the entropy or support size of the distribution, or the number of elements likely to be observed in a new, larger sample~\cite{orlitsky2016optimal,valiant2016instance}.  The benefit of pursuing this weaker goal of returning the unlabeled multiset is that it can be learned to significantly higher accuracy for a given sample size---essentially as accurate as the empirical distribution of a sample that is a logarithmic factor larger~\cite{valiant2011estimating,valiant2016instance}.

Building on the above work, the recent work~\cite{zou2016quantifying} considered the problem of recovering the ``frequency spectrum'' of rare genetic variants.  This problem is similar to the problem we consider, but focuses on a rather different regime.  Specifically, the model considered posits that each location $i=1,\ldots, n$ in the genome has some probability $p_i$ of being mutated in a given individual.  Given the sequences of $t$ individuals, the goal is to recover the set of $p_i$'s.  The work~\cite{zou2016quantifying} focused on the regime in which many of the $p_i$'s are significantly less than $\frac{1}{nt}$, and hence correspond to mutations that have never been observed; one conclusion of that work was that one can accurately estimate the number of such rare mutations that would be discovered in larger sequencing cohorts.   Our work, in contrast, focuses on the regime where the $p_i$'s are constant, and do not scale as a function of $n$, and the results are incomparable.  

Also related to the current work are the works~\cite{levi13,levi14} on \emph{testing} whether certain properties of collections of distributions hold.  The results of these works show that specific properties, such as whether most of the distributions are identical versus have significant variation, can be decided based on a sample size that is significantly sublinear in the number of distributions. 

Finally, the papers~\cite{daskalakis2015learning,diakonikolas2016properly} consider the related by more difficult setting of learning ``Poisson Binomials,'' namely a sum of independent  non-identical Bernoulli random variables, given access to samples.  In contrast to our work, in the setting they consider, each ``sample'' consists of only the sum of these $n$ random variables, rather than observing the outcome of each random variable.

\subsection{Organization of paper}
In Section~\ref{sec:methods} we describe the two components of our algorithm for recovering the population of Bernoulli parameters:  obtaining accurate estimates of the low-order moments (Section~\ref{sec:momest}), and leveraging those moments to recover the set of parameters (Section~\ref{sec:edm}). The complete algorithm is presented in Section~\ref{sec:ouralg}, and a discussion of the multi-dimensional extension to which Theorem~\ref{thm:multi} applies is described in Section~\ref{sec:multid}.  In Section~\ref{sec:empirical} we validate the empirical performance of our approach on synthetic data, as well as illustrate its potential applications to several real-world settings.

\section{Learning a population of binomial parameters}\label{sec:methods}

Our approach to recovering the underlying distribution or set of $p_i$'s  proceeds via the method of moments.  In the following section we show that, given $\ge t$ samples from each Bernoulli distribution, we can accurately estimate each of the first $t$ moments.  In Section~\ref{sec:edm} we explain how these first $t$ moments can then be leveraged to recover the set of $p_i$'s, to earth mover's distance $O(1/t)$.  

\subsection{Moment estimation}\label{sec:momest}

Our method-of-moments approach proceeds by estimating the first $t$ moments of $P$, namely $\frac{1}{n}\sum_{i=1}^n p_i^k$, for each integer $k$ between 1 and $t$. The estimator we describe is unbiased, and also applies in the setting of Corollary~\ref{cor:pop} where each $p_i$ is drawn i.i.d. from a distribution $P_{pop}.$ In this case, we will obtain an unbiased estimator for $\mathbb{E}_{p \leftarrow P_{pop}}[p^k].$ We limit ourselves to estimating the first $t$ moments because, as show in the proof of the lower bound, Proposition \ref{prop:lowerbound}, the distribution of the $X_i$'s are determined by the first $t$ moments, and hence no additional information can be gleaned regarding the higher moments.

For $1 \leq k \leq t$, our estimate for the $k^{th}$ moment is $\beta_k = \frac{1}{n} \sum_{i = 1}^n \dfrac{\binom{X_i}{k}}{\binom{t}{k}}.$ The motivation for this unbiased estimator is the following: Note that given any $k$ i.i.d. samples of a variable distributed according to Bernoulli($p_i$), an unbiased estimator for $p_i^k$ is their product, namely the estimator which is 1 if all the tosses come up heads, and otherwise is 0. Thus, if we average over all $\binom{t}{k}$ subsets of size $k$, and then average over the population, we still derive an unbiased estimator.

\begin{lemma}\label{lemma:union}
Given $\{p_1,\ldots,p_n\}$, let $X_i$ denote the random variable distributed according to $Binomial(t,p_i)$.  For $k \in \{1,\ldots,t\}$, let $\alpha_k = \frac{1}{n}\sum_{i=1}^n p_i^k$ denote the $k^{th}$ true moment, and $\beta_k = \frac{1}{n} \sum_{i = 1}^n \dfrac{\binom{X_i}{k}}{\binom{t}{k}}$ denote our estimate of the $k$th moment.  Then $\mathbb{E}[\beta_k] = \alpha_k, \text{  and  } \Pr(|\beta_k-\alpha_k|\ge \epsilon) \le 2e^{-\frac{1}{3}n\epsilon^2}.$
\end{lemma}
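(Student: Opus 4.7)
The plan is to split the lemma into its two claims and handle them separately; both follow from standard arguments once the right combinatorial identity is recognized.

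For the unbiasedness claim, I would compute $\mathbb{E}[\binom{X_i}{k}/\binom{t}{k}]$ for a single $i$ and then average. The cleanest route is a combinatorial interpretation: $\binom{X_i}{k}/\binom{t}{k}$ equals the fraction of size-$k$ subsets $S$ of the $t$ flips for which every flip in $S$ came up heads, i.e. $\binom{X_i}{k}/\binom{t}{k} = \binom{t}{k}^{-1}\sum_{S \subset [t], |S|=k} \mathbbm{1}[\text{all flips in } S \text{ are heads}]$. Since the $t$ flips are independent with common bias $p_i$, each indicator has expectation $p_i^k$, so linearity of expectation gives $\mathbb{E}[\binom{X_i}{k}/\binom{t}{k}] = p_i^k$. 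Averaging over $i$ yields $\mathbb{E}[\beta_k] = \alpha_k$. (An equivalent algebraic derivation uses $\binom{t}{j}\binom{j}{k} = \binom{t}{k}\binom{t-k}{j-k}$ inside the binomial sum, which collapses to $\binom{t}{k}p_i^k$.)

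For the concentration claim, observe that the random variables $Y_i := \binom{X_i}{k}/\binom{t}{k}$ are mutually independent (since the $X_i$ are), and each $Y_i$ lies in $[0,1]$ because $0 \le \binom{X_i}{k} \le \binom{t}{k}$. Then $\beta_k = \frac{1}{n}\sum_i Y_i$ is a bounded-variable empirical mean with expectation $\alpha_k$. A Chernoff/Bernstein bound for sums of independent $[0,1]$-bounded variables (e.g.\ the multiplicative Chernoff bound applied in additive form) yields $\Pr(|\beta_k - \alpha_k| \ge \epsilon) \le 2\exp(-n\epsilon^2/3)$; alternatively Hoeffding's inequality gives the even stronger $2\exp(-2n\epsilon^2)$, which also implies the stated bound.

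I do not expect any real obstacle; the only non-obvious ingredient is the combinatorial reinterpretation of $\binom{X_i}{k}/\binom{t}{k}$ as an average of indicator variables, which both makes unbiasedness immediate and confirms the $[0,1]$ bound needed for concentration. Everything else is a routine application of linearity of expectation and a standard concentration inequality.
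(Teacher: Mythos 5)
Your proof is correct and follows essentially the same route as the paper's: the same combinatorial reinterpretation of $\binom{X_i}{k}/\binom{t}{k}$ as an average of indicators over size-$k$ subsets for unbiasedness, and a standard concentration inequality for independent $[0,1]$-valued variables (the paper invokes Bernstein's inequality with $\epsilon\le 1$ to get $2e^{-\frac{3}{8}n\epsilon^2}\le 2e^{-\frac{1}{3}n\epsilon^2}$, while your Hoeffding alternative gives the stronger $2e^{-2n\epsilon^2}$, which also suffices).
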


Given the above lemma, we obtain the fact that, with probability at least $1 - \delta$, the events $|\alpha_k - \beta_k| \leq \sqrt{\ln(\frac{2t}{\delta})\frac{3}{n}}$ simultaneously occur for all $k \in \{1,\ldots,t\}.$


\subsection{Distribution recovery from moment estimates}\label{sec:ouralg}

Given the estimates of the moments of the distribution $P$, as described above, our algorithm will recover a distribution, $Q$, whose moments are close to the estimated moments.  We propose two algorithms, whose distribution recoveries are via the standard linear programming or quadratic programming approaches which will recover a distribution $Q$ supported on some (sufficiently fine) $\epsilon$-net of $[0,1]$: the variables of the linear (or quadratic) program correspond to the amount of probability mass that $Q$ assigns to each element of the $\epsilon$-net, the constraints correspond to ensuring that the amount of mass at each element is nonnegative and that the total amount of mass is 1, and the objective function will correspond to the (possibly weighted) sum of the discrepancies between the estimated moments, and the moments of the distribution represented by $Q$.

To see why it suffices to solve this program over an $\epsilon$-net of the unit interval, note that any distribution over $[0,1]$ can be rounded so as to be supported on an $\epsilon$-net, while changing the distribution by at most $\frac{\epsilon}{2}$ in Wasserstein distance.  Additionally, such a rounding alters each moment by at most $O(\epsilon)$, because the rounding alters the individual contributions of point masses to the $k^{th}$ moment by only $O(\epsilon^{k}) < O(\epsilon)$. As our goal is to recover a distribution with distance $O(1/t)$, it suffices to choose and $\epsilon$-net with $\epsilon \ll 1/t$ so that the additional error due to this discretization is negligible.  As this distribution recovery program has $O(1/\epsilon)$ variables and $O(t)$ constraints, both of which are independent of $n$, this program can be solved extremely efficiently both in theory and in practice.  

We formally describe this algorithm below, which takes as input $X_1,\ldots,X_n$, binomial parameter $t$, an integer $m$ corresponding to the size of the $\epsilon$-net, and a weight vector $w$.

\begin{center}
\begin{minipage}{0.85\textwidth}
\hrulefill \\
\textbf{Algorithms 1 and 2: Distribution Recovery with Linear / Quadratic Objectives}\\
\textbf{Input:} Integers $X_1,\ldots,X_n$, integers $t$ and $m$, and weight vector $w \in \mathbb{R}^t.$\nonumber \\
\textbf{Output:} Vector $q=(q_0,\ldots,q_m)$ of length $m + 1$, representing a distribution with probability mass $q_i$ at value $\frac{i}{m}$. 
\begin{itemize}
\vspace{-.3cm}\item For each $k \in \{1,\ldots,t\}$, compute $\beta_k = \frac{1}{n} \sum \dfrac{\binom{X_i}{k}}{\binom{t}{k}}.$
\vspace{-.2cm}\item (Algorithm 1) Solve the linear program over variables $q_0,\ldots,q_m$:
\vspace{-.2cm}\begin{equation}
\begin{split}
& \text{minimize: } \sum_{k=1}^t |\hat{\beta}_k - \beta_k| w_k, \text{   where } \hat{\beta}_k = \sum_{i=0}^m q_i (\frac{i}{m})^k, \\
& \text{subject to: } \sum_i q_i = 1, \text{ and for all $i$, } q_i \geq 0.
\end{split}
\end{equation}
\vspace{-.7cm}\item (Algorithm 2) Solve the quadratic program over variables $q_0,\ldots,q_m$:
\vspace{-.3cm}\begin{equation}
\begin{split}
& \text{minimize: } \sum_{k=1}^t (\hat{\beta}_k - \beta_k)^2 w_k^2, \text{   where } \hat{\beta}_k = \sum_{i=0}^m q_i (\frac{i}{m})^k, \\
& \text{subject to: } \sum_i q_i = 1, \text{ and for ll $i$, } q_i \geq 0.
\end{split}
\end{equation}
\end{itemize}
\vspace{-.3cm}\hrulefill
\end{minipage}
\end{center}


\subsubsection{Practical considerations}
Our theoretical results, Theorem~\ref{thm:main} and Corollary~\ref{cor:pop}, apply to the setting where the weight vector, $w$ in the above linear program objective function has $w_k = 1$ for all $k$.  It makes intuitive sense to penalize the discrepancy in the $k$th moment inversely proportionally to the empirically estimated standard deviation of the $k^{th}$ moment estimate, and our empirical results are based on such a weighted objective.

Additionally, in some settings we observed an empirical improvement in the robustness and quality of the recovered distribution if one averages the results of running Algorithm 1 or 2 on several random subsamples of the data. In our empirical section, Section~\ref{sec:empirical}, we refer to this as a \emph{bootstrapped} version of our algorithm. 


\subsection{Close moments imply close distributions}\label{sec:edm}

In this section we complete the high-level proof that Algorithm 1 accurately recovers $P$, the distribution corresponding to the set of $p_i$'s, establishing Theorem~\ref{thm:main} and Corollary~\ref{cor:pop}.  The guarantees of Lemma~\ref{lemma:union} ensure that, with high probability, the estimated moments will be close to the true moments.  Together with the observation that discretizing $P$ to be supported on an $\epsilon$-net of $[0,1]$ alters the moments by $O(\epsilon)$, it follows that there is a solution to the linear program in the second step of Algorithm 1 corresponding to a distribution whose moments are close to the true moments of $P$, and hence with high probability Algorithm 1 will return such a distribution.

To conclude the proof, all that remains is to show that, provided the distribution $Q$ returned by Algorithm 1 has similar first $t$ moments to the true distribution, $P$, then $P$ and $Q$ will be close in Wasserstein (earth mover's) distance.  We begin by formally defining the Wasserstein (earth mover's) distance between two distributions $P$ and $Q$:

\begin{definition}\label{def:EMD}
The Wasserstein, or earth mover's, distance between distributions $P, Q$, is $||P - Q||_W := \underset{\gamma \in \Gamma(P, Q)}{\inf}\int_{[0, 1]^{2d}} d(x, y) d\gamma(x, y)$, where $\Gamma(P, Q)$ is the set of all couplings on $P$ and $Q$, namely a distribution whose marginals agree with the distributions. The equivalent dual definition is 
$||P - Q||_W := \underset{g \in \mbox{Lip(1)}}{\sup}\int_g(x) d(P - Q)(x)$ where the supremum is taken over Lipschitz functions, $g$.
\end{definition}

As its name implies, this distance metric can be thought of as the cost of the optimal scheme of ``moving'' the probability mass from $P$ to create $Q$, where the cost per unit mass of moving from probability $x$ and $y$ is $|x-y|$.  Distributions over $\mathbb{R}$, it is not hard to see that this distance is exactly the $\ell_1$ distance between the associated cumulative distribution functions. 

The following slightly stronger version of Proposition 1 in ~\cite{kong2016spectrum}  bounds the Wasserstein distance between any pair of distributions in terms of the discrepancies in their low-order moments:

\begin{theorem}\label{thm:error}
For two distributions $P$ and $Q$ supported on [0, 1] whose first $t$ moments are $\boldsymbol{\alpha}$ and $\boldsymbol{\beta}$ respectively, the Wasserstein distance $||P - Q||_W$ is bounded by $\frac{\pi}{t}$ + $3^t \sum_{k = 1}^{t} |\alpha_k - \beta_k|$.
\end{theorem}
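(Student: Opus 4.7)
The natural approach is via Kantorovich--Rubinstein duality: writing $\|P-Q\|_W = \sup_{g\in \mathrm{Lip}(1)} \int g\,d(P-Q)$, it suffices to bound $\int g\,d(P-Q)$ uniformly over $1$-Lipschitz functions $g:[0,1]\to\mathbb{R}$. The plan is to approximate every such $g$ by a degree-$t$ polynomial $p$ and to split
\[
\int g\,d(P-Q) \;=\; \int (g-p)\,d(P-Q) \;+\; \int p\,d(P-Q).
\]
The first term is bounded uniformly in $P,Q$ by $2\|g-p\|_\infty$, while writing $p(x)=\sum_{k=0}^{t} c_k x^k$ the second term equals $\sum_{k=1}^{t} c_k(\alpha_k-\beta_k)$ (the degree-$0$ contribution cancels because $P,Q$ have equal total mass). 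So we need (i) a good uniform polynomial approximation to $g$, and (ii) a bound on the coefficients of that polynomial.

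For step (i), I would invoke a Jackson-type theorem: for any $1$-Lipschitz function on $[-1,1]$ there is a polynomial of degree $t$ approximating it in sup-norm to error at most $\tfrac{\pi}{2t}$ (the sharp Jackson constant for Lipschitz functions). Pulling back to $[0,1]$ via $x\mapsto 2x-1$ halves the Lipschitz constant, so there exists $p$ of degree $t$ with $\|g-p\|_\infty \le \tfrac{\pi}{2t}\cdot \tfrac{1}{2}\cdot 2 = \tfrac{\pi}{2t}$ (where the factors are tracked carefully to hit the stated $\tfrac{\pi}{t}$ after multiplying by $2$ from the total-variation bound on $P-Q$). This yields the first $\tfrac{\pi}{t}$ summand.

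For step (ii), I would write the approximating polynomial in the Chebyshev basis on $[-1,1]$ and track what happens under the affine change of variable to $[0,1]$. The Chebyshev polynomial $T_k$ has monomial coefficients bounded in absolute value by $2^{k-1}$, and after the substitution $x\mapsto 2x-1$ one picks up an additional factor controlled by the binomial expansion, producing a crude bound $|c_k|\le 3^{t}$ for each coefficient of the resulting polynomial on $[0,1]$. Combined with step (i), this gives
\[
\int g\,d(P-Q) \;\le\; 2\cdot\frac{\pi}{2t} + 3^{t}\sum_{k=1}^{t}|\alpha_k-\beta_k|,
\]
which is the claimed inequality after taking the supremum over $g$.

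The main obstacle is tracking constants tightly enough to hit $\pi/t$ (rather than a larger Jackson constant) and $3^t$ (rather than a larger bound like $(2t)^t$). The $\pi$ requires using the sharp Favard/Jackson constant for Lipschitz functions on $[-1,1]$, and the $3^t$ requires choosing the approximating polynomial carefully in the Chebyshev basis and bounding $\sum_j |c_k^{(j)}|$ where $c_k^{(j)}$ are the monomial coefficients after the affine substitution; a naive use of a general Bernstein-type coefficient bound would give something worse. Everything else is routine: bounding the discretization of $P$ onto an $\varepsilon$-net and applying Lemma~\ref{lemma:union} has already been handled in Section~\ref{sec:edm}, so this theorem is the only remaining step needed to conclude Theorem~\ref{thm:main}.
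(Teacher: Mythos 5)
Your overall architecture --- Kantorovich--Rubinstein duality, the split $\int g\,d(P-Q)=\int(g-p)\,d(P-Q)+\int p\,d(P-Q)$, the sharp Favard/Jackson constant $\pi/(2t)$ for the first term, and a Chebyshev-basis coefficient bound for the second --- is exactly the paper's plan. The gap is in step (ii), and it is precisely at the point you flag as delicate. First, the claim that $T_k$ has monomial coefficients bounded by $2^{k-1}$ is false: $T_5(x)=16x^5-20x^3+5x$ has a coefficient of magnitude $20>2^4$. The correct (loose) bound, obtained from the three-term recurrence $T_{k+1}=2xT_k-T_{k-1}$ by induction, is $3^{k-1}$. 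Second, and more seriously, routing through the affine substitution $x\mapsto 2x-1$ destroys the $3^t$ bound: the shifted Chebyshev polynomials $T_k(2x-1)$ have monomial coefficients of order $(3+2\sqrt{2})^k\approx 5.83^k$ (e.g.\ the sum of their absolute values is $|T_k(-3)|=\cosh(k\,\mathrm{arccosh}\,3)$), so your route yields roughly $6^t$ rather than $3^t$. The paper sidesteps this entirely by never changing variables: it extends the Lipschitz function $g$ from $[0,1]$ to $[-1,1]$ evenly via $g(-x)=g(x)$, builds the approximant $p_g(x)=\sum_k u_k T_k(x)$ on $[-1,1]$, and then simply restricts to $[0,1]$, where the distributions live. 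The monomial coefficients are then those of the \emph{unshifted} $T_k$, bounded by $3^{k-1}$, giving $|c_k|\le\sum_{i=1}^t|u_i|\,3^{i-1}\le 3^t$.

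One further point you leave implicit: you need $|u_k|\le 1$ for the Chebyshev-basis coefficients of the approximating polynomial, which does not follow automatically from ``some degree-$t$ polynomial achieves error $\pi/(2t)$.'' The paper gets this from the explicit Favard construction (Korneichuk, Theorem 4.2.1): writing $\phi(\theta)=g(\cos\theta)$, the approximant is the damped Fourier sum with $u_k=a_k\lambda_k^t$, where $|a_k|\le 1$ (after normalizing $\|g\|_\infty\le 1/2$, which is WLOG since constants integrate to zero against $P-Q$) and $0\le\lambda_k^t\le 1$. If you prefer not to open the black box, you can recover $|u_k|\le 2\|p\|_\infty\le 2(\tfrac12+\tfrac{\pi}{2t})$ from the orthogonality formula $u_k=\tfrac{2}{\pi}\int_0^\pi p(\cos\theta)\cos(k\theta)\,d\theta$, at the cost of a constant factor. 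With the even-extension fix and this coefficient bound in place, your argument closes and matches the stated constants.
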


The formal proof of this theorem is provided in the Appendix~\ref{app:robust}, and we conclude this section with an intuitive sketch of this proof.   For simplicity, first consider the setting where the two distributions $P, Q$ have the \emph{exact} same first $t$ moments.  This immediately implies that for any polynomial $f$ of degree at most $t$, the expectation of $f$ with respect to $P$ is equal to the expectation of $f$ with respect to $Q$.  Namely, $\int f(x)(P(x) - Q(x)) dx = 0$.  Leveraging the definition of Wasserstein distance $\|P-Q\|_W = \sum_{g \in Lip} \int g(x) (P(x)-Q(x)) dx$, the theorem now follows from the standard fact that, for any Lipschitz function $g$, there exists a degree $t$ polynomial $f_g$ that approximates it to within $\ell_{\infty}$ distance $O(1/t)$ on the interval $[0,1]$. 

If there is nonzero discrepancy between the first $t$ moments of $P$ and $Q$, the above proof continues to hold, with an additional error term of $\sum_{k=1}^t c_k(\alpha_k-\beta_k)$, where $c_k$ is the coefficient of the degree $k$ term in the polynomial approximation $f_g$.  Leveraging the fact that any Lipschitz function $g$ can be approximated to $\ell_{\infty}$ distance $O(1/t)$ on the unit interval using a polynomial with coefficients bounded by $3^t$, we obtain Theorem \ref{thm:error}.

\subsection{Extension: multivariate distribution estimation}\label{sec:multid}

We also consider the natural multivariate extension of the the problem of recovering a population of Bernoulli parameters.  Suppose, for example, that every member $i$ of a population of size $n$ has two associated binomial parameters $p_{(i, 1)}, p_{(i, 2)}$, as in Theorem \ref{thm:multi}. One could estimate the marginal distribution of the $p_{(i, 1)}$ and $p_{(i, 2)}$ separately using Algorithm 1, but it is natural to also want to estimate the joint distribution up to small Wasserstein distance in the 2-d sense. Similarly, one can consider the analogous $d$-dimensional distribution recovery question.

The natural idea underlying our extension to this setting is to include estimates of the multivariate moments represented by multi-indices $\alpha$ with $|\alpha| \leq t$. For example, in a 2-d setting, the moments for members $i$ of the population would look like $\mathbb{E}_{p_i \sim P} [p_{(i, 1)}^a p_{(i, 2)}^b]$. Again, it remains to bound how close an interpolating polynomial can get to any $d$-dimensional Lipschitz function, and bound the size of the coefficients of such a polynomial. To this end, we use the following theorem from ~\cite{bagby2002multivariate}:

\begin{lemma}\label{lemma:multi}
Given any Lipschitz function $f$ supported on $[0,1]^d$, there is a degree $s$ polynomial $p(x)$ such that 
$$
\sup_{x\in [0,1]^d} |p(x)-f(x)|\le \frac{C_d}{t},
$$
where $C_d$ is a constant that depends on $d$.
\end{lemma}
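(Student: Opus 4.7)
The plan is to reduce the multivariate statement to the classical univariate Jackson theorem by iterated coordinate-wise approximation. Recall that Jackson's theorem states that any $L$-Lipschitz function $g:[0,1]\to \mathbb{R}$ admits a polynomial $q$ of degree $k$ with $\sup_{x\in[0,1]}|g(x)-q(x)|\le CL/k$ for an absolute constant $C$; moreover, the approximation can be taken to be the convolution of $g$ with a Jackson-type kernel, which is a linear operator that preserves the Lipschitz constant in orthogonal directions when we apply it coordinate-wise.

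First I would fix an integer $k$ (to be optimized later as $k = \lfloor t/d \rfloor$) and define a sequence of approximations $f_0 = f, f_1, \ldots, f_d$ where $f_j$ is obtained from $f_{j-1}$ by replacing the dependence on $x_j$ with the Jackson polynomial approximation in that variable, holding the other coordinates fixed. Because $f_{j-1}$ is Lipschitz in $x_j$ with constant at most the global Lipschitz constant $L$ (uniformly in the other variables), the one-dimensional Jackson estimate gives $\sup_{x\in[0,1]^d}|f_{j-1}(x)-f_j(x)|\le CL/k$. A telescoping sum over $j = 1,\ldots, d$ then yields $\sup|f-f_d|\le dCL/k$, while $f_d$ is a polynomial of degree at most $k$ in each variable, hence of total degree at most $dk$. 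Choosing $k = \lceil t/d \rceil$ produces a polynomial of total degree at most $t$ with sup error bounded by $d^2 CL/t$, and we absorb the $d$-dependent factors into the constant $C_d$ to match the statement.

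The main obstacle is ensuring that the iterated construction remains valid: after step $j$, the partial approximation $f_j$ must still be Lipschitz in the remaining variables $x_{j+1},\ldots, x_d$ with a constant that does not blow up, so that Jackson's theorem applies again in the next step. This is where the choice of approximation operator matters — using the Jackson kernel convolution (a linear averaging operator whose norm on Lipschitz functions is bounded by $1$ in orthogonal directions) guarantees that the Lipschitz constant of $f_j$ in $x_{j+1},\ldots,x_d$ is bounded by that of $f_{j-1}$, and in particular by $L$ throughout. Verifying this requires writing the Jackson approximation as an explicit convolution and invoking Minkowski's inequality; alternatively, one can simply cite the multivariate Jackson-type theorem of Bagby, Bos, and Levenberg referenced in the paper, which directly yields the stated bound with a constant $C_d$ depending only on the dimension.
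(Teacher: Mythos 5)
Your proof is correct, but it takes a genuinely different route from the paper's. The paper simply invokes Theorem 1 of Bagby, Bos, and Levenberg: mollify $f$ by convolution with a holomorphic bump function and truncate the Maclaurin series of the resulting function. The payoff of that complex-analytic route is that Cauchy's integral formula then yields explicit bounds on the monomial coefficients, $|c_\alpha|\le A_d (2t)^d 2^t/3^{|\alpha|}$, which the paper actually needs downstream in the proof of Theorem~\ref{thm:multi} --- the sup-norm approximation alone is not enough there. Your argument is instead the elementary tensor-product construction: apply a univariate Jackson operator coordinate by coordinate, use linearity together with the fact that the (nonnegative, unit-mass) Jackson kernel is a sup-norm contraction to conclude that each step preserves the Lipschitz constant in the untouched variables, and telescope. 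This is valid --- the key observation that $\|J_k g - J_k g'\|_\infty \le \|g - g'\|_\infty$ transfers Lipschitz continuity in the orthogonal directions is exactly the point that makes the iteration go through --- and it has the advantage of being self-contained modulo univariate Jackson and of making the dimension dependence of $C_d$ explicit (roughly $O(d^2)$ for a Lipschitz-$1$ function). Two small points: you should take $k=\lfloor t/d\rfloor$ rather than $\lceil t/d\rceil$, since otherwise the total degree $dk$ can exceed $t$; and if you wanted to recover the coefficient bounds the paper uses later, your construction would require additional work (e.g., expanding each univariate Jackson polynomial in the Chebyshev basis as in the paper's proof of Theorem~\ref{thm:error}), which is where the paper's citation of Bagby et al.\ earns its keep.
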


In Appendix~\ref{app:multivariate}, we prove the following bound on the magnitude of the coefficients of the interpolating polynomial: $|c_{\alpha}| \leq \frac{(2t)^d 2^t}{3^{|\alpha|}}$,
where $c_{\alpha}$ is the coefficient of the $\alpha$ multinomial term. Together with the concentration bound of the $\alpha^{th}$ moment of the distribution, we obtain Theorem \ref{thm:multi}, the multivariate analog of Theorem~\ref{thm:main}.

\section{Empirical performance}\label{sec:empirical}

\subsection{Recovering distributions with known ground truth}

We begin by demonstrating the effectiveness of our algorithm on several synthetic datasets.  We considered three different choices for an underlying distribution $P_{pop}$ over $[0,1]$, then drew $n$ independent samples $p_1,\ldots,p_n \leftarrow P_{pop}.$  For a parameter $t$, for each $i \in \{1,\ldots,n\}$, we then drew $X_{i} \leftarrow Binomial(t,p_i)$, and ran our population estimation algorithm on the set $X_1,\ldots,X_n$, and measured the extent to which we recovered the distribution $P_{pop}$.  In all settings, $n$ was sufficiently large that there was little difference between the histogram corresponding to the set $\{p_1,\ldots,p_n\}$ and the distribution $P_{pop}$. Figure~\ref{FF} depicts the error of the recovered distribution as $t$ takes on all even values from $2$ to $14$, for three choices of $P_{pop}$: the ``3-spike'' distribution with equal mass at the values $1/4,1/2,$ and $3/4$, a Normal distribution truncated to be supported on $[0,1]$, and the uniform distribution over $[0,1]$.

\begin{figure}[H]
\centering
\begin{subfigure}{.3\textwidth}
	\centering
    \includegraphics[width=\linewidth]{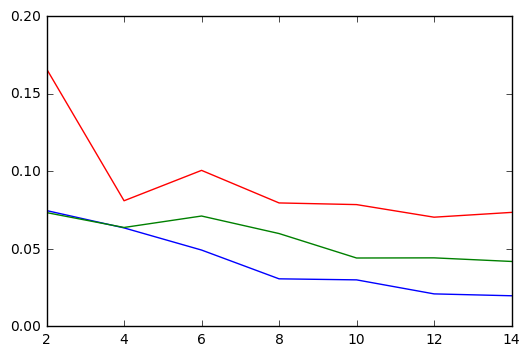}
    \caption{3-spike distribution}
    \label{fig:sub1}
\end{subfigure}
\begin{subfigure}{.3\textwidth}
	\centering
    \includegraphics[width=\linewidth]{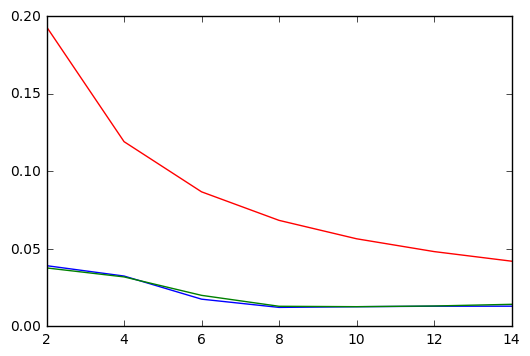}
    \caption{truncated normal}
    \label{fig:sub2}
\end{subfigure}
\begin{subfigure}{.3\textwidth}
	\centering
    \includegraphics[width=\linewidth]{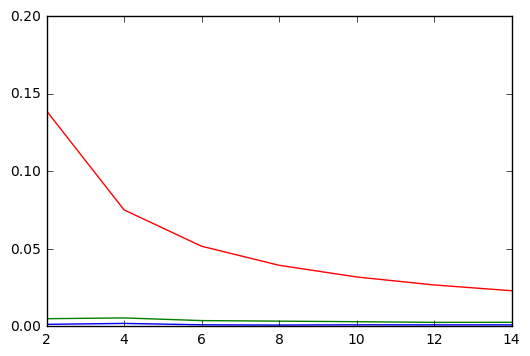}
    \caption{Uniform on $[0,1]$}
    \label{fig:sub1}
\end{subfigure}
\caption{Earth mover's distance (EMD) between the true underlying distribution $P_{pop}$ and the distribution recovered by Algorithm 2 for three choices of $P_{pop}$: (a) the distribution consisting of equally weighted point masses at locations $\frac{1}{4}, \frac{1}{2}, \frac{3}{4}$; (b) the normal distribution with mean 0.5 and standard deviation 0.15, truncated to be supported on $[0,1]$; and (c) the uniform distribution over $[0,1]$.  For each underlying distributions, we plot the EMD (median over 20 trials) between $P_{pop}$ and the  distribution recovered with Algorithm 2 as $t$, the number of samples from each of the $n$ Bernoulli random variables, takes on all even values from $2$ to $14$.  These results are given for $n=10,000$ (green) and $n=100,000$ (blue).  For comparison, the distance between $P_{pop}$ and the histogram of the empirical probabilities for $n=100,000$ is also shown (red).\label{FF}}
\label{fig:flights1}
\end{figure}

Figure~\ref{F2} shows representative plots of the CDFs of the recovered histograms and empirical histograms for each of the three choices of $P_{pop}$ considered above.
\begin{figure}[H]
\centering
\begin{subfigure}{.3\textwidth}
	\centering
    \includegraphics[width=\linewidth]{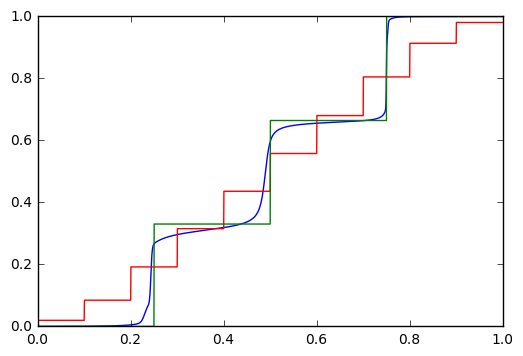}
    \caption{3-spike distribution}
    \label{fig:sub1}
\end{subfigure}
\begin{subfigure}{.3\textwidth}
	\centering
    \includegraphics[width=\linewidth]{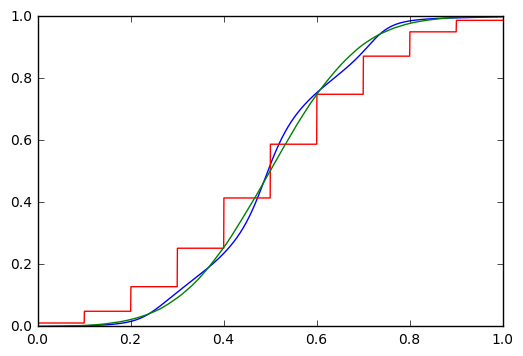}
    \caption{truncated normal}
    \label{fig:sub2}
\end{subfigure}
\begin{subfigure}{.3\textwidth}
	\centering
    \includegraphics[width=\linewidth]{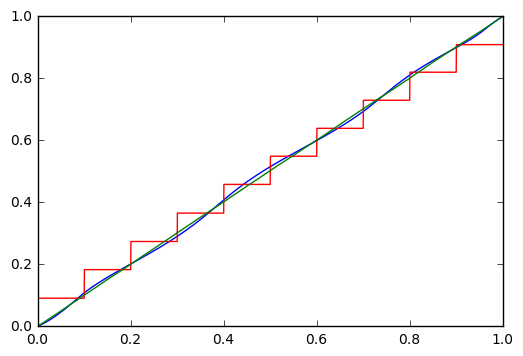}
    \caption{Uniform on $[0,1]$}
    \label{fig:sub1}
\end{subfigure}
\caption{CDFs of the true distribution $P$ (green), the histogram recovered by Algorithm 2 (blue) for $P$, and the empirical histogram (red) corresponding to $t=10$ samples and $n=100,000$.   Note that the empirical distribution is only supported on multiples of $\frac{1}{10}$. \label{F2}}
\label{fig:flights1}
\end{figure}

We also considered recovering the distribution of probabilities that different flights are delayed (i.e. each flight---for example \emph{Delta Airlines 123}---corresponds to a parameter $p \in [0,1]$ representing the probability that flight is delayed on a given day.  Our algorithm was able to recover this non-parametric distribution of flight delay parameters extremely well based on few ($\le 10$) data points per flight.  In this setting, we had access to a dataset with $>50$ datapoints per flight, and hence could compare the recovered distribution to a close approximation of the ground truth distribution.  These results are included in the appendix.

\subsection{Distribution of offspring sex ratios}

One of the motivating questions for this work was the following naive sounding question: do all members of a given species have the same propensity of giving birth to a male vs female child, or is there significant  variation in this probability across individuals?  For a population of $n$ individuals, letting $p_i$ represent the probability that a future child of the $i$th individual is male, this questions is precisely the question of characterizing the histogram or set of the $p_i$'s.  This question of the uniformity of the $p_i$'s has been debated both by the popular science community (e.g. the recent BBC article ``Why Billionaires Have More Sons''), and more seriously by the biology community.  

Meiosis ensures that each male produces the same number of spermatozoa carrying the X chromosome as carrying the Y chromosome.  Nevertheless, some studies have suggested that the difference in the amounts of genetic material in these chromosomes result in (slight) morphological differences between the corresponding spermatozoa, which in turn result in differences in their motility (speed of movement), etc. (see e.g.~\cite{Ychrom,motil}).  Such studies have led to a chorus of speculation that the relative timing of ovulation and intercourse correlates with the sex of offspring.  

While it is problematic to tackle this problem in humans (for a number of reasons, including sex-selective abortions), we instead consider this question for dogs.  Letting $p_i$ denote the probability that each puppy in the $i$th litter is male, we could hope to recover the distribution of the $p_i$'s.  If this sex-ratio varies significantly according to the specific parents involved, or according to the relative timing of ovulation and intercourse, then such variation would be evident in the $p_i$'s.  Conveniently, a typical dog litter consists of 4-8 puppies, allowing our approach to recover this distribution based on accurate estimates of these first moments.


Based on a dataset of $n\approx 8,000$ litters, compiled by the Norwegian Kennel Club, we produced estimates of the first 10 moments of the distribution of $p_i$'s by considering only litters consisting of at least $10$ puppies.  Our algorithm suggests that the distribution of the $p_i$'s is indistinguishable from a spike at $\frac{1}{2}$, given the size of the dataset.  Indeed, this conclusion is evident based even on the estimates of the first two moments: $\frac{1}{n}\sum_i p_i \approx 0.497$ and $\frac{1}{n}\sum_i p_i^2 \approx 0.249$, since among distribution over $[0,1]$ with expectation $1/2$, the distribution consisting of a point mass at $1/2$ has minimal variance, equal to $0.25$, and these two moments robustly characterize this distribution. (For example, any distribution supported on $[0,1]$ with mean $1/2$ and for which $>10\%$ of the mass lies outside the range $(0.45, 0.55)$, must have second moment at least $0.2505$, though reliably resolving such small variation would require a slightly large dataset.)

\subsection{Political tendencies on a county level}

We performed a case study on the political leanings of counties. We assumed the following model: Each of the $n = 3116$ counties in the US have an intrinsic ``political-leaning'' parameter $p_i$ denoting their likelihood of voting Republican in a given election.  We observe $t = 8$ independent samples of each parameter, corresponding to whether each county went Democratic or Republican during the 8 presidential elections from 1976 to 2004. 

\begin{figure}[H]
\centering
\begin{subfigure}{.32\textwidth}
	\centering
    \includegraphics[width=\linewidth]{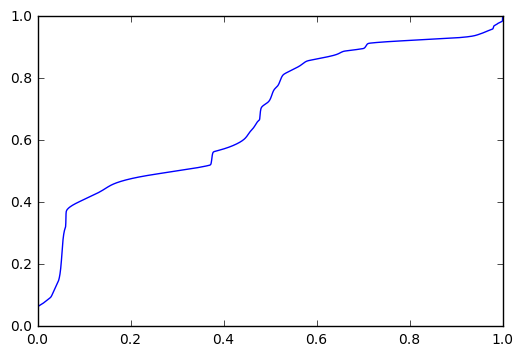}
    \caption{CDF recovered from 6 moments (blue), empirical CDF (red)}
    \label{fig:sub1}
\end{subfigure}\hspace{15mm}%
\begin{subfigure}{.32\textwidth}
	\centering
    \includegraphics[width=\linewidth]{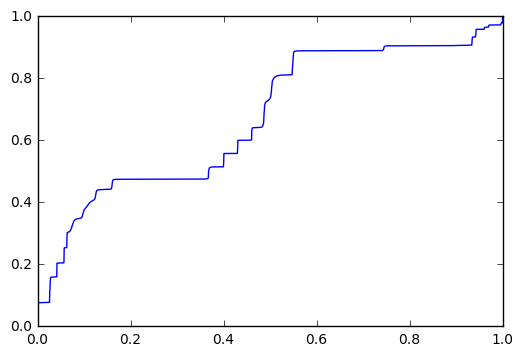}
    \caption{CDF recovered from 8 moments (blue), empirical CDF (red)}
    \label{fig:sub2}
\end{subfigure}
\caption{Output of bootstrapping Algorithm 2 on political data for $n=$3,116 counties over $t=8$ elections.}
\label{fig:flights1}
\end{figure}


\vspace{-.3cm}\subsection{Game-to-game shooting of NBA players}
We performed a case study on the scoring probabilities of two NBA players. One can think of this experiment as asking whether NBA players, game-to-game, have differences in their intrinsic ability to score field goals (in the sports analytics world, this is the idea of ``hot / cold'' shooting nights). The model for each player is as follows: for the $i$th basketball game there is some parameter $p_i$ representing the player's latent shooting percentage for that game, perhaps varying according to the opposing team's defensive strategy. The empirical shooting percentage of a player varies significantly from game-to-game---recovering the underlying distribution or histogram of the $p_i$'s allows one to directly estimate the consistency of a player.  Additionally, such a distribution could be used as a prior for making decisions during games.  For example, conditioned on the performance during the first half of a game, one could update the expected fraction of subsequent shots that are successful. 

The dataset used was the per-game 3 point shooting percentage of players, with sufficient statistics of ``3 pointers made'' and ``3 pointers attempted'' for each game. To generate estimates of the $k^{th}$ moment, we considered games where at least $k$ 3 pointers were attempted. The players chosen were Stephen Curry of the Golden State Warriors (who is considered a very consistent shooter) and Danny Green of the San Antonio Spurs (whose nickname ``Icy Hot'' gives a good idea of his suspected consistency).

\begin{figure}[H]
\centering
\begin{subfigure}{.32\textwidth}
	\centering
    \includegraphics[width=\linewidth]{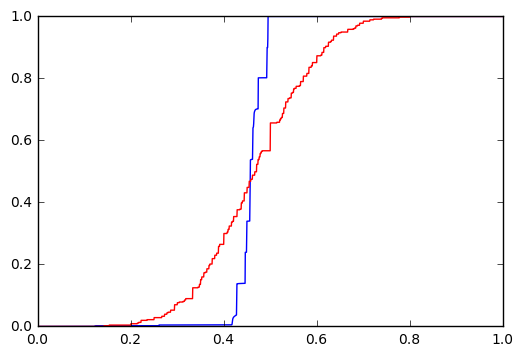}
    \caption{Estimated CDF of Curry's game-to-game shooting percentage (blue), empirical CDF (red), n=457 games.}
    \label{fig:sub1}
\end{subfigure}\hspace{15mm}%
\begin{subfigure}{.32\textwidth}
	\centering
    \includegraphics[width=\linewidth]{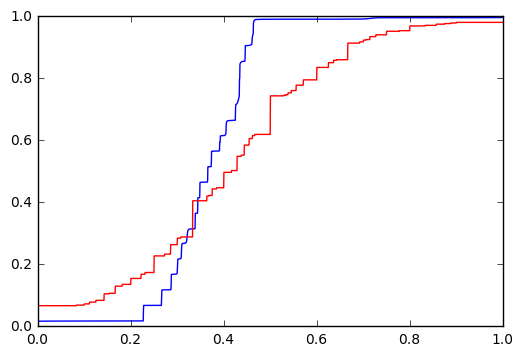}
    \caption{Estimated CDF of Green's game-to-game shooting percentage (blue), empirical CDF (red), n=524 games.}
    \label{fig:sub2}
\end{subfigure}
\caption{Estimates produced by bootstrapped version of Algorithm 2 on NBA dataset, 8 moments included}
\label{fig:flights1}
\end{figure}

\newpage
\subsubsection*{Acknowledgments} We thank Kaja Borge and Ane N{\o}dtvedt for sharing an anonymized dataset on sex composition of dog litters, based on data collected by the Norwegian Kennel Club.  This research was supported by NSF CAREER Award CCF-1351108, ONR Award N00014-17-1-2562, NSF Graduate Fellowship DGE-1656518, and a Google Faculty Fellowship. 

\bibliographystyle{plain}
{\footnotesize
\bibliography{nips}}
\newpage

\appendix

\section{Lateness in flights}


We evaluated our algorithm on flight delays, based on the 2015 Flight Delays and Cancellations dataset. For each of $n=$ 25,156 different flights---where a ``flight'' is defined via the airline and flight number---we let the corresponding binomial parameter $p$ correspond to the probability that flight departs at least $15$ minutes late.  Each flight considered had at least 50 records, and the empirical distribution of lateness parameters was our ground truth distribution $P_{pop}$. The estimates were very robust to repeated runs of the experiment, producing CDFs that matched the ground truth extremely closely for all settings of $t$.

\begin{figure}[H]
\centering
\begin{subfigure}{.3\textwidth}
	\centering
    \includegraphics[width=\linewidth]{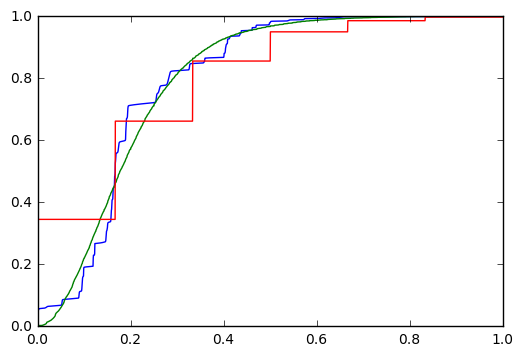}
    \caption{$t=6$ samples.}
\end{subfigure}
\begin{subfigure}{.3\textwidth}
	\centering
    \includegraphics[width=\linewidth]{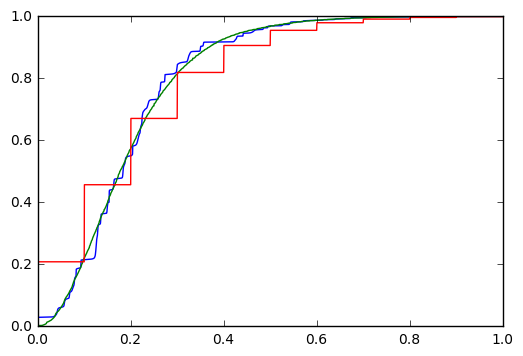}
    \caption{$t=10$ samples.\nonumber}
\end{subfigure}
\caption{Recovering $P_{pop}$. Distributions recovered by bootstrapping Algorithm 2 on 6 and 10 samples of each flight (blue), ground truth distribution (green), and empirical distribution (red) shown.\label{fig:flights1}}
\end{figure}

\section{Proof of Theorem~\ref{thm:error}, the Wasserstein distance bound}\label{app:robust}
\textbf{Theorem~\ref{thm:error}}
\emph{For two distributions $P$ and $Q$ supported on [0, 1] whose first $t$ moments are $\boldsymbol{\alpha}$ and $\boldsymbol{\beta}$ respectively, the Wasserstein distance $||P - Q||_W$ is bounded by $\frac{\pi}{t} + 3^t \sum_{k = 1}^{t} |\alpha_k - \beta_k|$.}
\begin{proof}
The natural approach to bounding the Wasserstein distance, 
$$\sup_{f \in {Lip}_1}\int f(x)\left(P(x)-Q(x)\right)dx,$$ is to argue that for any Lipschitz function, $f$, there is a polynomial $P_f$ of degree at most $k$ that closely approximates $f$. To see this, 
\begin{align*} 
\int_0^1 f(x)&(P(x)-Q(x))dx \\
\le&\int_0^1 |p_f(x)-f(x)|(P(x)-Q(x))dx + \int_0^1 p_f(x)(P(x) -Q(x))dx \\
\le&2||f-p_f||_{\infty}+ \sum_{k=1}^t c_k(\alpha_k-\beta_k),
\end{align*}
where $c_k$ be the coefficient of the degree-$k$ term of polynomial $p_f$. 
Hence all that remains is to argue that there is a good degree $k$ polynomial approximation of any Lipschitz function $f$. 

For convenience of the analysis, we generalize the domain of $f$ from $[0,1]$ to $[-1,1]$ by letting $f(-x)=f(x)$. We further define function  $\phi(\theta)=f(\cos(\theta))$ which also has Lipschitz constant $1$ since the cosine function has Lipschitz constant $1$. Now we are ready to apply Theorem 4.2.1 of~\cite{korneichuk1991exact} to $\phi(\theta)$, which states that for any periodic-$2\pi$ function with Lipschitz constant $1$ can be approximate by a degree $t$ trigonometric polynomials with $l_\infty$ approximation error $\frac{K_1}{t}=\frac{\pi}{2t}$ where $K_1$ is Favard constant which is equal to $\frac{\pi}{2}$. Let $U_n(\theta)$ be the degree $t$ trigonometric polynomials that achieves the stated approximation error. WLOG, by Proposition 2.1.6 of~\cite{korneichuk1991exact}, we may assume $U_n(\theta)$ is even. The algebraic polynomial to approximate $f(x)$ can be defined as $p_f(x)=U_t(\arccos(x))$ which again has degree $t$. Hence we have shown that $\|f-p_f\|_\infty \le \frac{\pi}{2t}$ and what remains is to bound the magnitude of $c_k$. 

The plan is to first obtain sharp bound of the coefficients of the trigonometric polynomials $U_t(\theta)$ explicitly, after which $c_k$ can be bounded by being expressed in terms of these coefficients. Notice that the coefficient of term $\cos(k\theta)$ in $U_t(\theta)$, denoted as $u_k$, is $a_k\lambda_k^t$ by Formula 1.1 in Chapter 4 of~\cite{korneichuk1991exact} where $a_k=\frac{1}{\pi}\int_0^{2\pi} \phi(\theta)\cos(k\theta) d\theta$ and $\lambda_k^t = \frac{k\pi}{2(t+1)}\frac{1}{\tan(\frac{k\pi}{2(t+1)})}$ by Formula 1.42 in Chapter 4 of~\cite{korneichuk1991exact}. Given that $\tan(x)\ge x$ for $0\le x\le \frac{\pi}{2}$ and $\frac{k\pi}{2(t+1)}<\frac{\pi}{2}$, we have $\frac{1}{\tan(\frac{k\pi}{2(t+1)})}\le \frac{2(t+1)}{k\pi}$ and hence $0\le \lambda_k^t \le 1$. In order to bound $a_k$, notice that WLOG, we may assume $\|f\|_{\infty}\le 1/2$ and $\|\phi\|_{\infty}\le 1/2$ since $f$ is Lipschitz-$1$. Hence $|a_k|=\frac{1}{\pi}|\int_0^{2\pi} \phi(\theta)\cos(k\theta) d\theta|\le \frac{1}{2\pi}\int_0^{2\pi} |\cos(k\theta)| d\theta\le 1$. We have shown that for all $k$, $u_k$ is at most $1$. 

The algebraic polynomial $U_t(\arccos(x))$ can be expressed as $\sum_{k=1}^t u_k T_k(x)$ where $T_k(x)$ is Chebyshev polynomials of the first kind. 
Note the recurrence relation for Chebyshev polynomials given by $T_{n+1}(x) = 2x T_n(x) - T_{n-1}(x), T_0(x)=1, T_1(x)=x$, for the $i$th polynomial, we can loosely bound the magnitude of any of its coefficients by $3^{i-1}$. 
Since $|u_i|<1$ for all $i$, the magnitude of coefficient $c_k$ can be upper bounded by $\sum_{i=1}^t 3^{i-1}\le 3^t$. 
Thus, we have shown that:
\begin{align*} 
\int_0^1 f(x)&(P(x)-Q(x))dx \le \frac{\pi}{t}+ 3^t\sum_{k=1}^t |\alpha_k-\beta_k|.
\end{align*}

\end{proof}
\section{Proof of Theorem~\ref{thm:main}}\label{app:propmomest}
In this section, we prove the main theorem of our paper, Theorem~\ref{thm:main}, which establishes guarantees of the estimation accuracy of our algorithm. Before proving our main theorem, we first prove Lemma~\ref{lemma:union}, the properties of our moment estimators:

\textbf{Lemma~\ref{lemma:union}}
\emph{Given $\{p_1,\ldots,p_n\}$, let $X_i$ denote the random variable distributed according to $Binomial(t,p_i)$.  For $k \in \{1,\ldots,t\}$, let $\alpha_k = \frac{1}{n}\sum_{i=1}^n p_i^k$ denote the $k^{th}$ true moment, and $\beta_k = \frac{1}{n} \sum_{i = 1}^n \dfrac{\binom{X_i}{k}}{\binom{t}{k}}$ denote our estimate of the $k$th moment.  Then $$\mathbb{E}[\beta_k] = \alpha_k, \text{  and  } \Pr(|\beta_k-\alpha_k|\ge \epsilon) \le 2e^{-\frac{1}{3}n\epsilon^2}.$$}
\begin{proof}
First we show that for each $i$ we have $\mathbb{E}[{\binom{X_{i}}{k}}]=p_i^k \binom{t}{k}$, then the claim $\mathbb{E}[\beta_k]=\alpha_k$ holds trivially due to the additivity of expectation. Notice that the numerator counts the number of subsets of size $k$ that are all 1, and the denominator is the number of subsets of size $k$. The probability that a certain subset of size $k$ is all $1$ is exactly $p_i^k$. Hence the claim about the expectation holds.

By Bernstein's Inequality, when $\epsilon\le 1$, $\Pr(|\beta_k-\alpha_k|\ge \epsilon)\le 2 e^{-\frac{3}{8}n\epsilon^2}\le 2e^{-\frac{1}{3}n\epsilon^2}$ holds. We have proved the claim about concentration. 
\end{proof}
We are now ready to prove Theorem~\ref{thm:main}. For convenience, we restate the theorem:

\textbf{Theorem~\ref{thm:main}}
\emph{Consider a set of $n$ probabilities, $p_1,\ldots,p_n$ with $p_i \in [0,1]$, and suppose we observe the outcome of $t$ independent flips of each coin, namely $X_1,\ldots,X_n$, with $X_i \sim $ Binomial$(t,p_i).$  There is an algorithm that produces a distribution $Q$ supported on $[0,1]$, such that with probability at least $1-\delta$ over the randomness of $X_1,\ldots,X_n$, $$\|P-Q\|_W \le \frac{\pi}{t} + 3^t \sum_{i=1}^{t} \sqrt{\ln(\frac{2t}{\delta})\frac{3}{n}}  \le \frac{\pi}{t} + O_{\delta}(\frac{3^tt\ln t}{\sqrt{n}}),$$ where $P$ denotes the distribution that places mass $\frac{1}{n}$ at value $p_i$, and $\|\cdot \|_W$ denotes the Wasserstein distance. }
\begin{proof}
Given Lemma~\ref{lemma:union}, we obtain the fact that, with probability at least $1 - \delta$, the events $|\alpha_k - \beta_k| \leq \sqrt{\ln(\frac{2t}{\delta})\frac{3}{n}}$ simultaneously occur for all $k \in \{1,\ldots,t\}.$ Applying Theorem~\ref{thm:error} yields the claimed accuracy guarantee.
\end{proof}

\section{Proof of Proposition~\ref{prop:lowerbound}, the information-theoretic lower bound}

In this section, we prove Proposition \ref{prop:lowerbound} establishing the tightness of the $\Theta(1/t)$ dependence in our recovery guarantees.  For convenience, we restate the proposition:

\medskip

\textbf{Proposition~\ref{prop:lowerbound}}  \emph{Let $P_{pop}$ denote a distribution over $[0,1]$, and for positive integers $t$ and $n$, let $X_1,\ldots,X_n$ denote independent random variables with $X_i$ distributed as $Binomial(t, p_i)$ where $p_i$ is drawn independently according to $P_{pop}$.  An estimator $f$ maps  $X_1,\ldots,X_n$ to a distribution $f(X_1,\ldots,X_n)$.  Then, for every fixed $t$, the following lower bound on the accuracy of any estimator holds for all $n$:
$$\inf_{f} \sup_{P_{pop}} \mathbb{E}\left[\|f(X_1,\ldots,X_n)-P_{pop}\|_W\right] > \frac{1}{4t}.$$ }

Our proof will leverage the following result from ~\cite{kong2016spectrum} which states that there exists a pair of distributions supported on $[0,1]$ whose first $t$ moments agree, but have Wasserstein distance $> 1/2t$:

\begin{lemma}\label{lemma:mmm}
For any $t$, there exists a pair of distributions $D_P, D_Q$ supported on $[0,1]$ that each consist of $O(t)$ point masses, such that $D_P$ and $D_Q$ have identical first $t$ moments, and $||D_P - D_Q||_W > \frac{1}{2t}$
\end{lemma}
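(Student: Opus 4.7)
The plan is to construct the pair $(D_P,D_Q)$ by a standard moment-matching / null-space argument, and then lower bound the Wasserstein distance by exploiting the oscillatory structure of the construction.

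First I would establish existence of a nonzero \emph{signed} measure supported on $t+2$ atoms of $[0,1]$ whose first $t$ moments all vanish. Pick any distinct $x_0<x_1<\cdots<x_{t+1}$ in $[0,1]$. The system $\sum_{k=0}^{t+1} c_k x_k^j=0$ for $j=0,1,\dots,t$ consists of $t+1$ equations in $t+2$ unknowns, and its coefficient matrix has full row rank by the Vandermonde determinant, so there is a one-dimensional null space. Since $\{1,x,\dots,x^t\}$ is a Chebyshev system on $[0,1]$ (a degree-$t$ polynomial has at most $t$ zeros), the null vector has strictly alternating signs along the ordered support; write $c_k=(-1)^k|c_k|$.

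Next I would perform the Jordan decomposition. The $j=0$ constraint forces $\sum_{k\text{ even}}|c_k|=\sum_{k\text{ odd}}|c_k|=:S$, so setting
\[
D_P \;=\; \tfrac{1}{S}\sum_{k\text{ even}} |c_k|\,\delta_{x_k},
\qquad
D_Q \;=\; \tfrac{1}{S}\sum_{k\text{ odd}} |c_k|\,\delta_{x_k}
\]
yields two probability measures, each with at most $\lceil(t+2)/2\rceil=O(t)$ atoms. The remaining moment conditions translate directly into equality of the first $t$ moments of $D_P$ and $D_Q$.

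The core step is to choose the $x_k$ so that $\|D_P-D_Q\|_W>1/(2t)$. I would place them at shifted Chebyshev extrema $x_k=(1-\cos(k\pi/(t+1)))/2$, for which the coefficients $|c_k|$ and the adjacent gaps (of order $1/t$) are explicit. Applying the univariate identity $\|D_P-D_Q\|_W=\int_0^1|F_P(y)-F_Q(y)|\,dy$, the interleaved support and alternating coefficient pattern ensure that the CDF differences accumulate without cancellation across each gap, and summing (coefficient)$\times$(gap) gives an $\Omega(1/t)$ lower bound. Equivalently, Kantorovich--Rubinstein duality lets one test $D_P-D_Q$ against a smoothly oscillating $1$-Lipschitz function (a rescaled $\sin((t+1)\pi x)/((t+1)\pi)$, or a piecewise-linear sawtooth synchronized with the signs of $c_k$), producing the same bound.

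The main obstacle is pinning down the sharp constant $1/2$ in $1/(2t)$, rather than merely $\Theta(1/t)$. A continuous analogue---the arcsine density $1/(\pi\sqrt{x(1-x)})$ perturbed by $\pm T_{t+1}(2x-1)$---matches all first $t$ moments by orthogonality of Chebyshev polynomials, but a direct substitution $x=(1+\cos\theta)/2$ yields only $\|D_P-D_Q\|_W\approx 2/(\pi^2 t)$, which falls below the target. Reaching $1/(2t)$ requires the \emph{discrete} Chebyshev construction---where the endpoint atoms at $0$ and $1$ carry disproportionately large weights---plus a careful telescoping evaluation of the CDF-difference integral across the Chebyshev-spaced gaps. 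That quantitative check is the technical heart of the argument.
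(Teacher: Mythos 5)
First, a point of calibration: the paper does not prove this lemma at all --- it is imported verbatim from the cited spectrum-estimation paper \cite{kong2016spectrum} --- so your write-up is being judged against the statement itself rather than an in-paper argument. Your construction is the right one, and the first two stages are correct and complete: the $(t+1)\times(t+2)$ Vandermonde system has a one-dimensional null space, the Chebyshev-system argument forces strict sign alternation of the null vector, and the $j=0$ equation makes the Jordan decomposition into two probability measures with matching first $t$ moments, each on $O(t)$ atoms.

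The genuine gap is exactly where you flag it: the bound $\|D_P-D_Q\|_W>\frac{1}{2t}$ is never established, and your own continuous-analogue computation (yielding roughly $0.4/t<0.5/t$) leaves open whether the discrete version clears the bar. It does, and the verification is short enough that omitting it makes the proof incomplete rather than merely terse. With $x_k=(1+\cos(k\pi/(t+1)))/2$, applying the Chebyshev--Gauss--Lobatto quadrature rule (exact to degree $2t+1$ on these $t+2$ nodes) to the products $T_{t+1}T_j$, $j\le t$, identifies the annihilating signed measure explicitly as $\sum_k(-1)^k\epsilon_k\delta_{x_k}$ with $\epsilon_k=\tfrac12$ at the two endpoints and $\epsilon_k=1$ otherwise --- so the endpoint atoms carry \emph{half} weight, not the ``disproportionately large'' weight you assert. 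Normalizing by $S=(t+1)/2$, the interior atoms of $D_P$ and $D_Q$ have mass $\frac{2}{t+1}$ and the endpoint atoms mass $\frac{1}{t+1}$; consequently $H=F_{D_P}-F_{D_Q}$ jumps to $\pm\frac{1}{t+1}$ after the first atom, alternates between $+\frac{1}{t+1}$ and $-\frac{1}{t+1}$ at every subsequent atom, and returns to $0$ at $x=1$. Thus $|H|\equiv\frac{1}{t+1}$ on $(0,1)$ and
\[
\|D_P-D_Q\|_W=\int_0^1|H(x)|\,dx=\frac{1}{t+1},
\]
with no telescoping over gap lengths required --- the gap sizes are irrelevant because $|H|$ is globally constant. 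This exceeds $\frac{1}{2t}$ for every $t\ge 2$; for $t=1$ the construction gives exactly $\tfrac12=\tfrac{1}{2t}$, so the strict inequality should be read as requiring $t\ge 2$ (a harmless edge case). In summary: right construction and right structural intuition, but the decisive quantitative step --- the entire content of the constant $\tfrac12$ --- is left undone, and the one quantitative claim you do make about the construction (large endpoint weights) is incorrect.
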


\begin{proof}[Proof of Proposition~\ref{prop:lowerbound}]
Consider the distributions $D_P$ and $D_Q$ whose existence is guaranteed by Lemma~\ref{lemma:mmm}.  Consider the distribution of $X_i$, where $X_i$ is drawn by first drawing $p_i$ according to $D_P$, and then drawing $X_i \leftarrow Binomial(p_i,t)$.  Similarly, let $Y_i$ denote the random variable defined by drawing $q_i$ from $D_Q$ and then drawing $Y_i \leftarrow Binomial(q_i,t).$

We now claim that the distribution of $X_i$ and $Y_i$ are identical, and hence, for every $n$, the joint distribution of $(X_1,\ldots,X_n)$ is identical to that of $(Y_1,\ldots,Y_n)$, and hence they cannot be distinguished.

Indeed, the distributions of $X_i$ and $Y_i$ are given by:

\begin{equation}
\begin{split}
& \mathbb{P}(X_i = k) = \int_{0}^{1} \binom{t}{k} x^k (1-x)^{t-k} D_P(x) dx \nonumber \\
& \mathbb{P}(Y_i = k) = \int_{0}^{1} \binom{t}{k} x^k (1-x)^{t-k} D_Q(x) dx
\end{split} \nonumber
\end{equation}

Noting that the integrand is a degree-$t$ polynomial, and that $D_P$ and $D_Q$ have the same first $t$ moments yields the conclusion that these two distributions are identical.

To conclude, note that if we are given $(Z_1,\ldots,Z_n)$ with the promise that, with probability $1/2$, they correspond to $D_P$ and with probability $1/2$ they correspond to $D_Q$, then no algorithm can correctly guess which of these distributions they were drawn from, with probability of success greater than $1/2$, and hence no estimator can achieve an expected error of recovery better than $\frac{1}{2}\|D_P-D_Q|_W >\frac{1}{4t},$ as desired.
\end{proof}

\section{Proof of Theorem~\ref{thm:multi}, multivariate setting}\label{app:multivariate}

The prove of Theorem~\ref{thm:multi} will be identical to Theorem~\ref{thm:main}, except that we will need the following slightly stronger version of Lemma~\ref{lemma:multi}:

\textbf{Lemma~\ref{lemma:multi}}
\emph{Given any Lipschitz function $f$ supported on $[0,1]^d$, there is a degree $t$ polynomial $p(x)=\sum_{|\alpha|\le t} c_{\alpha}x^{\alpha}$ where $\alpha$ is multi-index $\{\alpha_1,\alpha_2,\ldots\alpha_d\}$ such that 
\begin{equation}
\sup_{x\in [0,1]^d} |p(x)-f(x)|\le \frac{C_d}{t},
\end{equation}
and $c_{\alpha}\le A_d\frac{(2t)^{d}2^t}{3^{|\alpha|}}$.}

\begin{proof}
This polynomial approximation lemma is basically a restatement of Theorem 1 in ~\cite{bagby2002multivariate}. What we need to do is only to give an explicit upper bound of the coefficients. 

The high level idea is to first convolve $f$ with a holomorphic bump function $G$ which gives $H=f*G$, then the Maclaurin series of $H$ is a good polynomial approximation of $H$ and also $f$. 

By the definition of Maclaurin series, the coefficient $c_\alpha=\frac{\partial^\alpha H(0)}{|\alpha|!}$. Suppose $H$ is holomorphic on an open neighborhood of some polydisk $E_S$ with radius $S$, assuming $\sup_{z\in E_S} |H(z)|\le M$, by Cauchy's integral formula, we have $|c_\alpha| = |\frac{1}{2\pi i} \oint_{|z|=S} \frac{H(z)}{z^{|\alpha|+1}}|\le \frac{M}{S^{|\alpha|}}$. By the definition of $R$ in the proof of Theorem 1 in ~\cite{bagby2002multivariate}, we can set $R=1$ such that function $f$ is supported on box $B_R$. Let $S=2R+1=3$ and follow all the parameter settings, by Equation 14 in \cite{bagby2002multivariate}, we have $|c_\alpha|\le \frac{M}{S^{|\alpha|}}\le A_d\frac{(2t)^d(t+1)2^t}{t 3^{|\alpha|}}\le A_d\frac{(2t)^{d}2^t}{3^{|\alpha|}}$, where $A_d$ is a constant that depends on $d$.
\end{proof}

\end{document}